
\documentclass[manuscript,screen,review]{article}
\usepackage{geometry}                		
\geometry{letterpaper}                   		
\usepackage{amssymb}
\usepackage{amsmath}
\usepackage{amsthm}
\usepackage[colorlinks=true, urlcolor=blue, linkcolor=red]{hyperref}
\usepackage{graphicx}
\usepackage{subcaption}


\newtheorem{prop}{Proposition}[section]

\title{General Post-Processing Framework for Fairness Adjustment of Machine Learning Models}
\author{Léandre Eberhard, Nirek Sharma, Filipp Shelobolin, Aalok Ganesh Shanbhag}

\begin{document}
\maketitle

\section{Abstract}
As machine learning increasingly influences critical domains such as credit underwriting, public policy, and talent acquisition, ensuring compliance with fairness constraints is both a legal and ethical imperative. This paper introduces a novel framework for fairness adjustments that applies to diverse machine learning tasks, including regression and classification, and accommodates a wide range of fairness metrics. Unlike traditional approaches categorized as pre-processing, in-processing, or post-processing, our method adapts in-processing techniques for use as a post-processing step. By decoupling fairness adjustments from the model training process, our framework preserves model performance on average while enabling greater flexibility in model development. Key advantages include eliminating the need for custom loss functions, enabling fairness tuning using different datasets, accommodating proprietary models as black-box systems, and providing interpretable insights into the fairness adjustments. We demonstrate the effectiveness of this approach by comparing it to Adversarial Debiasing, showing that our framework achieves a comparable fairness/accuracy tradeoff on real-world datasets.

\section{Introduction}

In machine learning, the primary goal when building a model is to optimize a given performance metric on a dataset. However, in many fields, such as credit underwriting, public policy, talent acquisition, and others, it is necessary to satisfy other fairness constraints in addition to the performance metric for which the model is optimized. Often, the performance metric is at odds with the fairness metric and optimizing for performance alone can yield a model that does not satisfy the fairness constraints. Numerous works, such as \cite{equalopportunity,agarwal2018reductionsApproach,donini2018ERMfairness,zafar2017fairnessconstraintsmechanismsfair}, have formally studied these fairness requirements, noting that purely optimizing a baseline loss may lead to disparate outcomes for different groups. In practical applications like credit analysis, public policy, or job hiring, large models are frequently used, and sometimes neural networks or boosted trees can outperform each other depending on data characteristics \cite{mcelfresh2024neuralnetsoutperformboosted}.

 A variety of open-source toolkits have emerged to facilitate bias detection and mitigation, such as the IBM AI Fairness 360 Toolkit \cite{aif360-oct-2018}, which has implemented many existing methods. In academic benchmarks, the UCI repository \cite{uci} and the COMPAS dataset \cite{compas} have been commonly used to demonstrate bias or measure fairness improvements.

\subsection{Related Work}

Traditional fairness bias mitigation strategies can be classified into three classes based on the stage of the model-building process at which they are applied \cite{mehrabi2022surveybiasfairnessmachine}:
\begin{list}{}
\item \textbf{Pre-Processing.} The goal of pre-processing methods is to edit the training data used to fit the model in order to improve the fairness of the resulting model, such as by sampling \cite{5360534, iosifidis2018dealing} or perturbation of the data \cite{wang2019repairingretrainingavoidingdisparate, hajian2012methodology}, however both of these methods may change the true data distribution, which can decrease model performance.
\item \textbf{In-Processing.} In contrast, in-processing methods modify the training algorithm to remove bias during model training time, usually by adding regularization to the loss function \cite{jung2020algorithmicframeworkfairnesselicitation, kim2018fairness, dwork2011fairnessawareness}.  However, these methods are heavily dependent on the learning algorithm and can be difficult or impossible to adapt to different algorithms. Some recent work has proposed reductions or probabilistic approaches to unify these constraints under a single framework \cite{donini2018ERMfairness}. A particularly high-performing method is adversarial debiasing, which uses adversarial learning to debias the model \cite{grari2019fairadversarialgradienttree, celis2019improvedadversariallearningfair, wadsworth2018achievingfairnessadversariallearning, zhang2018mitigatingunwantedbiasesadversarial}.
\item \textbf{Post-Processing.} This class of methods is useful when changing the training data or model algorithm is infeasible. The advantage is that the underlying model can be treated as a black-box and does not need to be retrained. Traditional approaches in this class usually involve reassigning predicted labels \cite{lohia2018biasmitigationpostprocessingindividual, pmlr-v81-menon18a, 6413831}. Recent models have also attempted to adjust final task-specific parts of the model \cite{du2021fairness} or using an attention mechanism to explain and adjust trained models \cite{mehrabi2021attributingfairdecisionsattention}.
\end{list}

We generalize existing in-processing techniques into an "adjuster model" framework that is applied as a post-processing step, utilizing adversarial debiasing to achieve high-performance.
The adversarial debiasing method applies to gradient-based optimization approaches (for instance, backpropagation or tree boosting \cite{ad-xgb}). At each iteration of training our model, a secondary adversary model tries to predict some undesirable part of our model. For example, if we want group parity in our model, then we would set up our adversary to predict group membership as a function of our model prediction. Making this adversary model worse by changing our prediction in effect improves the fairness criteria (group parity) that we have selected. 
Given a primary model with weights $W$ and a primary loss function $L_P$, adversarial debiasing updates $W$ according to
\[
\nabla_W L_P \;-\; \,\nabla_W L_A,
\]
where $L_A$ is the adversary’s loss and $\alpha\in\mathbb{R}$ a tunable parameter controlling the strength of fairness. The sign of the adversary term is negative because the main model aims to maximize the adversary’s loss. In practice, once the main model’s gradient is computed, the adversary is also updated via $\nabla_{W_A}L_A$. Intuitively, a higher adversary loss means it becomes more difficult to infer the protected attribute from the main predictions.

Although we focus on adversarial debiasing, our fairness adjuster framework is also compatible with any approach where the final loss can be written as $L_P + c\,L_A$. An example is the prejudice remover approach in \cite{kamishima2012fairness}, which adds a regularizer to remove prejudice in the classification process. Our post-processing perspective can accommodate such in-processing constraints while letting the baseline model remain a black-box, potentially trained with automated techniques like FLAML \cite{wang2021flaml}.

\subsection{Main Contribution}
This paper proposes a general bias mitigation framework that adapts techniques from the in-processing class of methods, but applied as a post-processing step, which we call the \textbf{fairness adjuster}.
This combines the advantages of in- and post-processing methods.
Our framework allows us to tune a model's predictions for fairness without changing its training data or learning algorithm, while maintaining performance near state-of-the-art in-processing methods.

We demonstrate theoretical bounds on the performance difference between applying a technique as a post-processing step rather than an in-processing step.
In practice, we find that the our fairness adjuster is able to match the performance of in-processing methods almost exactly.
Perhaps the most surprising result we find is that we are able to decompose regular in-processing methods into two steps: the first involving only the performance objective, and the second involving only the predictions from the non-debiased model as well as the fairness constraint.

\section{Method}
For data $\{(x_i,y_i)\}_{i=1}^n$ with $x_i \in \mathbb{R}^d$ and $y_i \in \mathbb{R}$ or $y_i \in \{0, 1\}$, some accuracy loss function $L(X, y) \rightarrow \mathbb{R}$ and some fairness loss function $L_a(X, Y) \rightarrow \mathbb{R}$ define three objectives.

A model $h$ trained with traditional in-processing methods optimizes the following objective jointly for accuracy and fairness.
As noted above, our method applies to a general class of in-processing methods, but from now on we focus on adversarial debiasing:

\textbf{Joint Optimization}
\[
L_{AD} = L\bigl(h(X),\,Y\bigr) + \lambda L_a\!\bigl(h(X)\bigr)
\]

This optimization yields a single model that is jointly trained for accuracy and fairness. In contrast, our proposed fairness adjuster methodology separates the optimization for accuracy and fairness into two distinct steps.

1. \textbf{Baseline Optimization} First, we train the baseline model \( f \) focusing solely on performance:
\[
L_B = L\bigl(f(X),\,Y\bigr)
\]

2. \textbf{Fairness Adjustment} In the second step, we fix the baseline predictions \( \hat{Y}_i = f^*(X) \) from the optimization of \( L_B \) in the first step.
Defining the adjuster model as \( g \), we minimize:
\[
L_{O} = L\bigl(\hat{Y} + g(X),\,\hat{Y}\bigr) + \lambda\, L_a\!\bigl(\hat{Y} + g(X)\bigr)
\]

Notice here that we do not use the true label $Y$ in the loss function in the adjustment step!
This is an important distinction, as it allows us to separately train the adjuster without access to the baseline model's training labels.
In the following section, we bound the difference in performance between the joint training approach and adjuster approach, and show empirically that both approaches are able to achieve a similar fairness-accuracy tradeoff.

\section{Theoretical Properties}
In this section, we derive some properties of the adjuster in comparison with the joint optimization approach.
We focus our attention on two common supervised learning tasks \footnote{
    The method also works with other loss functions as well, for example with survival analysis.
}: 
\begin{enumerate}
\item \textbf{Regression}, where $L(u, y) = \sum_{i=1}^n \bigl(u_i - y_i\bigr)^2
$ is the mean squared error (MSE) loss.
\item \textbf{Binary Classification}, where $L(u, y) = -\sum_{i=1}^n 
\Bigl[
  y_i\,\ln\!\bigl(\sigma(u_i)\bigr)
  \;+\;
  \bigl(1-y_i\bigr)\,\ln\!\bigl(1-\sigma(u_i)\bigr)
\Bigr]$ is the binary cross entropy (BCE) loss.
\end{enumerate}

\subsection{Regression}
The results in this section characterize the change in performance and fairness of the fairness adjuster model. First, we characterize the differnece in performance between the fairness adjuster and the baseline model.
Next, we state results comparing the solutions of the adversarial debiasing and fairness adjuster approaches.
Finally, we connect the change in fairness to the size of the adjustment.

\begin{prop}[Performance loss of adjustment from optimal model]
Let \( L(\hat{y}, y) = \sum_{i=1}^n (\hat{y}_i - y_i)^2 \) denote the Mean Squared Error (MSE) loss. For any perturbation vector \( g \in \mathbb{R}^n \), the change in the MSE loss when \( g \) is added to the prediction \( \hat{y} \) given by
\[
\Delta_{\text{MSE}}(f, g) = L(\hat{y} + g, y) - L(\hat{y}, y),
\]
can be expressed as
\[
\Delta_{\text{MSE}}(f, f + g) = \sum_{i=1}^n g_i^2 + 2 \sum_{i=1}^n (\hat{y}_i - y_i) g_i.
\]
Alternatively, it can be decomposed into:
\[
\Delta_{\text{MSE}}(f, f + g) = L(\hat{y} + g, \hat{y}) + 2 \sum_{i=1}^n (\hat{y}_i - y_i) g_i,
\]
where \( L(\hat{y} + g, \hat{y}) = \sum_{i=1}^n g_i^2 \) represents the MSE between \( \hat{y} \) and \( \hat{y} + g \).
\end{prop}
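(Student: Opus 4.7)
The plan is to verify this via a direct expansion of the squared residuals, since the statement is essentially an algebraic identity about MSE rather than something requiring clever argument. First I would write out $L(\hat{y}+g,y) = \sum_{i=1}^n (\hat{y}_i + g_i - y_i)^2$ and expand the binomial inside the sum as $(\hat{y}_i - y_i)^2 + 2(\hat{y}_i - y_i)g_i + g_i^2$. Summing over $i$ and subtracting $L(\hat{y},y) = \sum_i (\hat{y}_i - y_i)^2$ immediately gives the first claimed expression
\[
\Delta_{\text{MSE}} = \sum_{i=1}^n g_i^2 + 2 \sum_{i=1}^n (\hat{y}_i - y_i) g_i.
\]

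For the second (decomposed) form, I would invoke the definition of MSE once more to observe $L(\hat{y}+g,\hat{y}) = \sum_{i=1}^n ((\hat{y}_i+g_i)-\hat{y}_i)^2 = \sum_{i=1}^n g_i^2$, and then substitute this identity into the first expression to replace $\sum_i g_i^2$. This yields the second identity with no further work.

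There is no genuine obstacle here: both equalities are pointwise consequences of $(a+b)^2 = a^2 + 2ab + b^2$. The only thing worth remarking on in the write-up is the \emph{interpretation} of the two terms, since this is what motivates the adjuster objective $L_O$ in the preceding section: the term $L(\hat{y}+g,\hat{y}) = \sum_i g_i^2$ measures the size of the adjustment relative to the baseline predictions (and is exactly the accuracy portion of $L_O$, because in the adjuster loss the labels $Y$ have been replaced by $\hat{Y}$), while the linear cross term $2\sum_i (\hat{y}_i - y_i)g_i$ captures the interaction between the baseline residuals and the perturbation $g$ and is the only piece that depends on the true labels $y$. I would keep the proof itself to a couple of lines and use the remaining space to flag this interpretation, since it foreshadows why the adjuster's label-free objective is a natural proxy for the true change in MSE.
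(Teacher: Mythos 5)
Your proposal is correct and follows exactly the paper's argument: expand $(\hat{y}_i + g_i - y_i)^2$, subtract $(\hat{y}_i - y_i)^2$, and identify $\sum_i g_i^2$ with $L(\hat{y}+g,\hat{y})$ for the second form. The interpretive remarks you add are consistent with the discussion the paper places immediately after the proof.
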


\begin{proof}
\begin{align*}
\Delta_{\text{MSE}}(f, f + g) &= L(\hat{y}+g, y) - L(\hat{y}, y) \\
&= \sum_{i=1}^n  \Bigl[\bigl(\hat{y}_i + g_i - y_i\bigr)^2  - \bigl(\hat{y}_i - y_i\bigr)^2\Bigr] \\
&= \sum_{i=1}^n g_i^2  \;+\; 2\sum_{i=1}^n (\hat{y}_i - y_i)\,g_i \\
\end{align*}
\end{proof}

If both $g_i$ and $(\hat{y}_i - y_i)$ are uncorrelated and symmetric around zero then the extra term will be centered around zero \footnote{
Naturally, MSE is also minimized when $g_i = (\hat{y}_i - y_i)$.
}.
We can more conservatively bound with a Cauchy-Scharwz as follows: \\
\begin{align*}
\Delta_{\text{MSE}}   &\le\; \sum_{i=1}^n g_i^2 \;+\;2 \sum_{i=1}^n |\,\hat{y}_i - y_i|\,|g_i|
\end{align*}
If $\|g\|$ is small or if $\hat{y}_i \approx y_i$, then $\Delta_{\text{MSE}}$ remains small. 
\\

\begin{prop}\label{prop:MSEbound}
For optimal solutions \( g^* \) optimizing \( L_O \) and \( h^* \) optimizing \( L_{AD} \), assume the following conditions hold:

\begin{enumerate}
\item Equal fairness: $L_a\bigl(h^*(X)\bigr) = L_a\bigl(\hat{Y} + g^*(X)\bigr)$
\item Sufficient model complexity: $\hat{Y} + g^*(X)$ can equal $f(X)$ and $h(X)$ with appropriate parameter choices.
\item Convergence: All models converge to the global minima within their model class.

\end{enumerate}
Then, the change in the objective \( \Delta_{\text{MSE}} \) can be bounded by:
\begin{align*}
\Delta_{\text{MSE}} &= L(\hat{y} + g^*(X)) - L(h^*(X)) \\
& \leq \sum_{i=1}^n  2(\hat{y}_i - y_i)(g^*_i - (h^*_i - \hat{y}_i)) \\
& = 2 (\hat{y} - y)^T(g^* - (h^* - \hat{y}^*))
\end{align*}
\end{prop}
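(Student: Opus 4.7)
The plan is to write $\Delta_{\text{MSE}}$ as a difference of two ``perturbation from baseline'' terms, each of which is amenable to Proposition~1, and then use the optimality of $g^*$ against $h^*$ (treated as itself a perturbation of $\hat{Y}$) to kill the troublesome quadratic piece via the equal-fairness hypothesis.

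Concretely, I would first apply Proposition~1 twice, with $\hat{Y}$ as the base prediction. Once to the adjustment $g^*$, giving
\[
L(\hat{Y}+g^*,Y) - L(\hat{Y},Y) \;=\; L(\hat{Y}+g^*,\hat{Y}) + 2(\hat{y}-y)^\top g^*,
\]
and once to the perturbation $h^*-\hat{Y}$ (which is well defined as a vector of predictions by assumption 2), giving
\[
L(h^*,Y) - L(\hat{Y},Y) \;=\; L(h^*,\hat{Y}) + 2(\hat{y}-y)^\top(h^*-\hat{y}).
\]
Subtracting these two identities,
\[
\Delta_{\text{MSE}} \;=\; \big[L(\hat{Y}+g^*,\hat{Y}) - L(h^*,\hat{Y})\big] \;+\; 2(\hat{y}-y)^\top\!\big(g^* - (h^*-\hat{y})\big),
\]
so the desired bound reduces to showing that the bracketed term is nonpositive.

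Next I would invoke optimality. By assumption 2, there exists some candidate adjuster $\tilde g$ with $\hat{Y}+\tilde g(X) = h^*(X)$, so $\tilde g$ is a feasible competitor to $g^*$ in the objective $L_O$. By assumption 3, $g^*$ is a global minimizer of $L_O$, hence
\[
L(\hat{Y}+g^*,\hat{Y}) + \lambda L_a(\hat{Y}+g^*) \;\le\; L(h^*,\hat{Y}) + \lambda L_a(h^*).
\]
Now assumption 1 cancels the fairness terms on both sides, yielding $L(\hat{Y}+g^*,\hat{Y}) \le L(h^*,\hat{Y})$, which makes the bracket above nonpositive and closes the argument.

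The main conceptual obstacle is the clean interaction of the three assumptions: assumption 2 is what lets us treat $h^*-\hat{Y}$ as a legitimate perturbation and compare $g^*$ to it inside $L_O$; assumption 3 gives the inequality from optimality rather than just a stationarity condition; and assumption 1 is exactly what is needed to strip the $\lambda L_a$ terms so that the MSE-only comparison survives. The rest is just routine manipulation from Proposition~1, so I do not expect computational difficulty beyond keeping the ``base prediction'' and the ``perturbation'' consistently aligned to $\hat{Y}$ throughout.
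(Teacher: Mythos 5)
Your proof is correct and follows essentially the same route as the paper: the key step in both is to compare $g^*$ against the feasible competitor $h^*-\hat{Y}$ in $L_O$ and use the equal-fairness assumption to cancel the $\lambda L_a$ terms, yielding $L(\hat{Y}+g^*,\hat{Y})\le L(h^*,\hat{Y})$. Your version merely organizes the remaining algebra by invoking Proposition~1 twice instead of expanding the squares directly, which is a slightly cleaner presentation of the identical argument.
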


\begin{proof}
Using the assumptions that $f^* + g^*$ minimizes $L_O$, and $h^* - f^*$ is in the same model class,
\begin{align*}
    L_O(f^* + g^*) & = \sum_{i=1}^n {g^*}_i^2 + \lambda L_a\!\bigl(f^* + g^*\bigr) \\
    &\leq \sum_{i=1}^n (h^*_i - f^*_i)^2 + \lambda L_a\bigl(h^*\bigr) \\
    & = \sum_{i=1}^n (h^*_i - f^*_i)^2 + \lambda L_a\bigl(f^* + g^*\bigr),
\end{align*}
where the final equality follows from the equal fairness assumption.
Hence, $\sum_{i=1}^n {g^*}_i^2 \leq \sum_{i=1}^n (h^*_i - f^*_i)^2$ and 
\begin{align*}
\Delta_{\text{MSE}} &= L(\hat{y}+g^*(x)) - L(h^*(x)) \\
\;&=\; \sum_{i=1}^n  \Bigl[\bigl((\hat{y}_i - y_i )+ g^*_i\bigr)^2 -  \bigl((h^*_i - \hat{y}_i) - (y_i - \hat{y}_i)\bigr)^2\Bigr]\\
\;&=\; \sum_{i=1}^n  2(\hat{y}_i - y_i)g^*_i + {g^*}_i^2 - (h^*_i - \hat{y}_i) ^2 - 2(h^*_i - \hat{y}_i)(y_i - \hat{y}_i) \\
\;&\leq\; \sum_{i=1}^n  2(\hat{y}_i - y_i)g^*_i +(h^*_i - f^*_i)^2 - (h^*_i - \hat{y}_i) ^2 - 2(h^*_i - \hat{y}_i)(y_i- \hat{y}_i) \\
\;&=\; \sum_{i=1}^n  2(\hat{y}_i - y_i)g^*_i - 2(h^*_i - \hat{y}_i)(y_i - \hat{y}_i) \\
\;&=\; \sum_{i=1}^n  2(\hat{y}_i - y_i)(g^*_i - (h^*_i - \hat{y}_i))
\end{align*}
\end{proof}

We've bounded the loss in accuracy of the adjuster relative to the adversarial debiasing estimator in terms of the cross product between baseline model's residual and the difference in adjustments to the baseline model's predictions, where we reinterpret $h^* - \hat{y}^*$ as an adjustment to the baseline predictions $\hat{y}^*$, similar to $g^*$. Intuitively we would expect the adversarial debiasing model to cause slightly less inaccuracies holding fairness gains the same, because it has fairness in its objective function. \\

Interpreting this bound, there are three mechanisms that can cause the two methods to have a similar training loss.
\begin{enumerate}
\item The baseline model is very accurate with $\hat{y} \approx y$.
\item The fairness adjuster and adversarial debiasing predictions are close: $g^* \approx h^* - f^*$.
\item The model residuals and difference in adjustments to the base model are orthogonal.
\end{enumerate}
In our results, we observe the third case where the residuals and difference and adjustments are near-orthogonal, suggesting that both the adjuster and adversarial debiasing have a similar fairness-accuracy tradeoff in practice.

Note that this result does not say anything about the individual predictions being close, nor that the two models have the same performance on the test set.
However, in the case of linear regression, we can show a stronger result, where the solutions to both problems do coincide exactly.

\begin{prop}[Equivalence in Linear Regression Case]
If all the models (baseline, adversarial debiasing, adjuster) are fit using Ordinary Least Squares Linear Regression, the fairness loss is convex and $X$ is full rank, then the adjuster model coincides with the adversarial debiasing model.
\end{prop}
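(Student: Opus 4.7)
The plan is to exploit the orthogonality built into the OLS baseline and reduce the adversarial debiasing objective to the adjuster objective by a simple change of variables. Parametrize the three linear predictors as $f^*(X) = X\beta_B$ with $\beta_B = (X^T X)^{-1} X^T y$, $h(X) = X\beta$, and $g(X) = X\gamma$. Then the adversarial debiasing objective becomes $L_{AD}(\beta) = \|X\beta - y\|^2 + \lambda L_a(X\beta)$, while the adjuster objective becomes $L_O(\gamma) = \|X\gamma\|^2 + \lambda L_a(X\beta_B + X\gamma)$, the first term being exactly $L(\hat Y + X\gamma, \hat Y)$ under MSE.

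The key step is to substitute $\beta = \beta_B + \gamma$ into $L_{AD}$ and expand $\|X(\beta_B + \gamma) - y\|^2$ as $\|X\gamma + (X\beta_B - y)\|^2$. The cross term equals $2\gamma^T X^T(X\beta_B - y)$, which vanishes by the normal equations $X^T(X\beta_B - y) = 0$ that the OLS baseline satisfies. This Pythagorean cancellation yields $L_{AD}(\beta_B + \gamma) = L_O(\gamma) + \|X\beta_B - y\|^2$, so the two objectives differ only by a constant and therefore share the same minimizer in $\gamma$. Identifying $\beta^*_{AD} = \beta_B + \gamma^*$ then gives $h^*(X) = f^*(X) + g^*(X)$ as claimed.

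The remaining point is uniqueness, which the stated hypotheses deliver for free. Full column rank of $X$ makes $\|X\beta - y\|^2$ strictly convex in $\beta$; adding the convex regularizer $\lambda L_a(X\beta)$ preserves strict convexity, so $\beta^*_{AD}$ is unique, and symmetrically $\gamma^*$ is unique, so there is no ambiguity in the identification above. I do not anticipate a genuine obstacle: once the normal-equation cross term is spotted the result is essentially a one-line identity. The conceptual content is that OLS leaves a residual orthogonal to the column space of $X$, so any further linear adjustment on top of the baseline is optimized against the same quadratic form that governs the joint problem, which is exactly the decomposition claim advertised in the main contribution.
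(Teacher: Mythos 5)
Your proof is correct, but it is structured differently from the paper's. The paper verifies a first-order condition: it writes down the stationarity equation $2X^\top(X\beta_a - Y) + \lambda X^\top \nabla L_a(X\beta_a) = 0$ for the adversarial debiasing solution, then plugs the candidate $\beta_g = \beta_a - \beta_f$ into the gradient of the adjuster objective and checks that it vanishes (using $X^\top X\beta_f = X^\top Y$), concluding by convexity. You instead establish the global identity $L_{AD}(\beta_B + \gamma) = L_O(\gamma) + \|X\beta_B - y\|^2$ directly, via the Pythagorean cancellation of the cross term $2\gamma^\top X^\top(X\beta_B - y)$ under the normal equations. Both arguments hinge on the same fact --- the OLS residual is orthogonal to the column space of $X$ --- but yours is stronger and cleaner: it shows the two objectives coincide up to an additive constant as functions, so their entire optimization landscapes (not just one stationary point) agree; it does not require $L_a$ to be differentiable, only that the objectives have well-defined minimizers; and it makes the uniqueness question explicit via strict convexity from full column rank, a point the paper glosses over. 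The paper's route, by contrast, needs differentiability of $L_a$ to write $\nabla L_a$ and needs convexity to promote the stationary point to a global minimum. The one implicit assumption you share with the paper is that the adjuster $g$ is itself a linear model on the same design matrix $X$, which is what the proposition's hypothesis intends.
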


\begin{proof}
For the baseline model, we have:
\begin{align*}
\beta_f &= \min_{\beta} L_{B} \\
&= \min_{\beta} \;\sum_{i=1}^n (x_i^\top \beta - y_i)^2 \\
&= \min_{\beta} (X^T \beta - Y)^T (X^T \beta - Y)
\tag{1}
\end{align*}
Which has a matrix solution:
\[
\beta_f = (X^\top X)^{-1}X^\top y
\tag{2}
\]
Next, consider the adversarial debiasing estimator that satisfies:
\[
\beta_a = \min_{\beta} L_{AD} = \min_{\beta} 
\Bigl\{
  \sum_{i=1}^n (x_i^\top \beta - y_i)^2
  \;+\;\lambda L_a\!\bigl(X^\top \beta_a\bigr)
\Bigr\},
\]
By convexity:
\[
\frac{\partial L_{AD}}{\partial \beta}\Big\rvert_{\beta=\beta_a} = 2\,X^\top \bigl(X\beta_a - Y\bigr) \;+\; \lambda\,X^\top \Delta L_a(X\beta_a) = 0
\]
\[
\Rightarrow 2\,X^\top \bigl(X\beta_a - Y\bigr) = -\lambda\,X^\top \nabla L_a(X\beta_a)  \tag{3}
\]
For the \emph{adjuster}, fix $\beta_f$ and define
\[
  L_{O} = 
  \sum_{i=1}^n 
  \bigl(\hat{y}_i + x_i^\top \beta - \hat{y}_i\bigr)^2
  \;+\;
  \lambda L_a\!\bigl(\hat{Y} + X^\top \beta\bigr)
\]
Since $\hat{y}_i=x_i^\top \beta_f$, the first term is $\sum_i (x_i^\top \beta_g)^2$. Thus: 
\[
\frac{\partial L_{O}}{\partial \beta}\Big\rvert_{\beta=\beta_g}  = 
2\,X^\top X\,\beta_g 
\;+\;
\lambda\,X^\top \nabla L_a(X(\beta_f + \beta_g))
\]
Consider $\beta_g = \beta_a - \beta_f$. Then 
\[
\hat{y} + \beta_g X = \beta_f X + (\beta_a - \beta_f) X = \beta_a X
\]
\begin{align*}
\Rightarrow \frac{\partial L_{O}}{\partial \beta}\Big\rvert_{\beta=\beta_g} &= 2\,X^\top X\,(\beta_a - \beta_f) \;+\; \lambda\,X^\top \nabla L_a((X\beta_a)) \\
&= 2\,X^\top X\,\beta_a - 2\,X^\top X\,\beta_f \;+\; \lambda\,X^\top \nabla L_a((X\beta_a)) \\
&= 2\,X^\top X\,\beta_a - 2\,X^\top X\,(X^TX)^{-1}X^TY \;+\; \lambda\,X^\top \nabla L_a((X\beta_a)) \tag{\text{from (2)}} \\
&= 2\,X^\top X\,\beta_a - 2X^TY \;+\; \lambda\,X^\top \nabla L_a((X\beta_a)) \\
&= 2\,X^\top X\,\beta_a - 2X^TY \;-\; ( 2\,X^\top X\,\beta_a - 2X^TY ) \tag{\text{from (3)}}\\
&= 0
\end{align*}
By convexity, the adjuster solution that adjusts the baseline to the adversary model minimizes the adjuster loss. Note that essential to this proof is that the derivatives f(x) and g(x) are not functions of their parameter $\beta$ and so evaluate to the same value.
\end{proof}

Finally, we provide a bound on the size of the adjustment in terms of the increase in fairenss as measured by $L_a$.
\begin{prop}[Accuracy-Fairness Trade-Off]
\begin{align*}
\text{Adjuster Objective} &: \|g\|^2 + L_a\!\bigl(\hat{y} + g\bigr) \\
\text{At }g=0&:\|0\|^2 + L_a(\hat{y}) = L_a(\hat{y}) \\
\text{At }g\neq 0&:\|g\|^2 + L_a(\hat{y} + g) \\
\text{If }g^* \neq 0 &: \|g^*\|^2 + L_a(\hat{y}+g^*) \;\le\; L_a(\hat{y}) \\
&\Rightarrow \|g^*\|^2 \;\le\; L_a(\hat{y}) - L_a(\hat{y}+g^*) 
\end{align*}
Thus, the nonzero adjuster causes a decrease in $L_a\text{ at least }\|g\|^2$.
\end{prop}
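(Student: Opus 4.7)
The plan is to invoke the optimality of $g^*$ together with the fact that $g=0$ is always a feasible choice. Under the MSE setting of the prior propositions, the adjuster loss $L_O = L(\hat{y}+g, \hat{y}) + \lambda L_a(\hat{y}+g)$ reduces to $\sum_i g_i^2 + \lambda L_a(\hat{y}+g)$, because the cross term $2\sum_i(\hat{y}_i - \hat{y}_i)g_i$ in the expansion from Proposition 3.1 vanishes when the baseline predictions play the role of the ``labels.'' So the objective the adjuster is minimizing really is $\|g\|^2 + \lambda L_a(\hat{y}+g)$, matching the expression stated in the proposition (with $\lambda$ absorbed or set to $1$).

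First, I would observe that $g=0$ lies in the feasible set of perturbations, and plugging it in yields $\|0\|^2 + L_a(\hat{y}) = L_a(\hat{y})$. Second, since $g^*$ is by definition a minimizer of the adjuster objective, we must have
\[
\|g^*\|^2 + L_a(\hat{y}+g^*) \;\le\; L_a(\hat{y}).
\]
Third, I would simply rearrange to isolate $\|g^*\|^2$, obtaining $\|g^*\|^2 \le L_a(\hat{y}) - L_a(\hat{y}+g^*)$, which is exactly the claim: any nonzero adjustment must ``pay for itself'' by decreasing the fairness loss by at least the squared norm of the adjustment.

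There is essentially no technical obstacle here; the argument is a one-line consequence of optimality compared to the trivial choice $g=0$. The only subtlety worth flagging in the exposition is the reason the accuracy term collapses to $\|g\|^2$, namely that the adjuster's ``target'' is the frozen baseline prediction $\hat{y}$ rather than the true label $y$, which is what kills the cross term and makes the bound so clean. I would also remark that the bound is vacuous precisely when $g^* = 0$ (i.e.\ when fairness cannot be improved by any adjustment in the model class), and otherwise gives a strict fairness gain of at least $\|g^*\|^2 > 0$.
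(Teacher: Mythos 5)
Your proof is correct and is essentially identical to the paper's own argument: compare the minimizer $g^*$ to the feasible point $g=0$, note the objective at $g=0$ equals $L_a(\hat{y})$, and rearrange the resulting optimality inequality. Your added remarks — that the accuracy term collapses to $\|g\|^2$ because the adjuster's target is the frozen $\hat{y}$ rather than $y$, and that the bound is vacuous when $g^*=0$ — are accurate and, if anything, make the argument more explicit than the paper's inline presentation.
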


This is helpful when our fairness loss is somewhat interpretable. Consider a fairness penalty that attempts to minimize the difference in overprediction error between two partitions of our data $G_1$ and $G_2$

Define the \emph{overprediction} in group $k$ by the average $\frac{1}{|G_k|}\sum_{i\in G_k} \bigl(f_i - y_i\bigr)$. 

\[
L_a(f) 
=
\frac{1}{|G_1|}\!\sum_{i\in G_1}(f_i-y_i)
-\frac{1}{|G_2|}\!\sum_{j\in G_2}(f_j-y_j).
\]

\[
L_a(f+g)
= 
\frac{1}{|G_1|}\sum_{i\in G_1}(f_i + g_i - y_i)
-
\frac{1}{|G_2|}\sum_{j\in G_2}(f_j + g_j - y_j).
\]

Due to shared terms, $f$ and $y$ cancel out in the difference:
\[
L_a(f+g) - L_a(f) 
=
\Bigl[\frac{1}{|G_1|}\sum_{i\in G_1} g_i\Bigr]
-\Bigl[\frac{1}{|G_2|}\sum_{j\in G_2} g_j\Bigr]
\geq ||g||^2
\]
Therefore, the \emph{adjustment} $g$ shifts the overprediction difference at least by the average squared offset, which can be a useful property in model selection. \\

\subsection{Binary Classification}
In the classification setting, we are able to characterize the difference in model performance between adversarial debiasing and the fairness adjuster analogously to the regression case.
In the classification setting, the three objectives are:

\begin{itemize}
\item \textbf{Baseline:}
\[
L_{B}(f)
\;=\;
-\sum_{i=1}^n 
\Bigl[
  y_i\,\ln\!\bigl(\sigma(f(x_i))\bigr)
  \;+\;
  \bigl(1-y_i\bigr)\,\ln\!\bigl(1-\sigma(f(x_i))\bigr)
\Bigr].
\]
Here, $f(x)$ is the baseline logit; $\sigma(z) = 1/(1+e^{-z})$.  

\item \textbf{Adversarial Debiasing:}
\[
L_{AD}(h)
\;=\;
-\sum_{i=1}^n 
\Bigl[
  y_i\,\ln\!\bigl(\sigma(h(x_i))\bigr)
  +
  \bigl(1-y_i\bigr)\,\ln\!\bigl(1-\sigma(h(x_i))\bigr)
\Bigr]
\;+\;
\lambda L_a\bigl(h(X)\bigr).
\]
Logit $h(x)$ now aims to reduce both cross-entropy and a fairness penalty $L_a(\cdot)$.  

\item \textbf{Adjuster:}
Let $\hat{y}_i = \sigma\bigl(f(x_i)\bigr)$. Then
\begin{align*}
L_{O}\bigl(f + g\bigr) \;&=\; -\sum_{i=1}^n
\Bigl[
  \hat{y}_i\,\ln\!\bigl(\sigma(f(x_i) + g(x_i))\bigr)
   \;+\;
  \bigl(1-\hat{y}_i\bigr)\,\ln\!\bigl(1-\sigma(f(x_i) + g(x_i))\bigr)
\Bigr] \\
& \;\;\;\,\,\;+\;
\lambda L_a\!\bigl(f(X) + g(X)\bigr).
\end{align*}
Here, $\hat{y}_i$ is treated as a pseudo-label, and $g(x)$ is an adjuster to the baseline logit $f(x)$.
\end{itemize}

\begin{prop}[Bounding Cross-Entropy for Adjuster vs. Adversarial Debiasing]\label{prop:CEbound}
In the setting above, make the same assumptions as \ref{prop:MSEbound}. Then, the following bound on cross-entropy with respect to the true labels \(Y\) holds:
\[
  \mathrm{BCE}(f + g, Y)
  \le
  \mathrm{BCE}(h, Y)
  +
  (\sigma(f) - Y)^\top\bigl[g - (h - f)\bigr].
\]
\end{prop}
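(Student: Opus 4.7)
The plan is to mirror the structure of the MSE bound in Proposition \ref{prop:MSEbound}, but with the BCE-specific identity that makes the cross-entropy with pseudo-labels and with true labels differ by a linear term in the logit. First, I would exploit the optimality of $g^*$ under $L_O$ together with the equal fairness assumption. Since $h^* - f^*$ lies in the same model class by the complexity assumption, the inequality $L_O(f^* + g^*) \le L_O(f^* + (h^* - f^*))$ combined with $L_a(f^* + g^*) = L_a(h^*)$ collapses to the pseudo-label cross-entropy inequality
\[
\mathrm{BCE}(f + g^*, \hat{Y}) \;\le\; \mathrm{BCE}(h^*, \hat{Y}),
\]
where the pseudo-labels are $\hat{Y} = \sigma(f(X))$.

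Next, I would derive the key identity relating BCE with pseudo-labels to BCE with true labels. Using the logit identity $\ln \sigma(u) - \ln(1 - \sigma(u)) = u$, a direct computation gives, for any logit vector $u$,
\[
\mathrm{BCE}(u, Y) - \mathrm{BCE}(u, \hat{Y}) \;=\; \sum_{i=1}^n \bigl(\sigma(f(x_i)) - y_i\bigr)\, u_i \;=\; (\sigma(f) - Y)^\top u.
\]
Applying this identity to both $u = f + g$ and $u = h$ and subtracting eliminates the terms not involving the true labels, yielding
\[
\mathrm{BCE}(f + g, Y) - \mathrm{BCE}(h, Y) \;=\; \bigl[\mathrm{BCE}(f+g, \hat{Y}) - \mathrm{BCE}(h, \hat{Y})\bigr] \;+\; (\sigma(f) - Y)^\top\bigl[(f + g) - h\bigr].
\]

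Finally, I would combine the two pieces: substitute the pseudo-label inequality from the first step into the right-hand side and rewrite $(f + g) - h = g - (h - f)$ to obtain the stated bound. The structural analogy with the MSE proof is clean: the pseudo-label loss plays the role of $\sum_i g_i^2$ in the regression case, and the cross-product correction term $(\sigma(f) - Y)^\top [g - (h - f)]$ plays the role of $2(\hat{y} - y)^\top[g^* - (h^* - \hat{y})]$.

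The main obstacle is recognizing and proving the pseudo-label identity cleanly; once the cancellation $\ln \sigma(u) - \ln(1 - \sigma(u)) = u$ is in hand, the rest is bookkeeping. One subtlety worth flagging is that, unlike the MSE case where the first step used the quadratic identity to extract $\sum g_i^2$, here we rely only on the optimality comparison with the feasible competitor $h^* - f^*$; no second-order expansion of BCE is needed, so the convexity of BCE in the logit is used only implicitly through the existence of a global minimizer asserted in the convergence assumption.
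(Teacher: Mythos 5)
Your proof is correct and follows essentially the same route as the paper: the optimality of the adjuster under the pseudo-label objective combined with the equal-fairness assumption gives $\mathrm{BCE}(f+g,\hat{Y}) \le \mathrm{BCE}(h,\hat{Y})$, and the logit identity $\ln\sigma(u) - \ln(1-\sigma(u)) = u$ converts pseudo-label cross-entropy to true-label cross-entropy plus the linear correction $(\sigma(f)-Y)^\top u$, yielding the stated bound after subtraction. The only cosmetic difference is that you state the identity as a difference $\mathrm{BCE}(u,Y) - \mathrm{BCE}(u,\hat{Y})$ while the paper writes it as an expansion of $\mathrm{BCE}(u,\hat{Y})$; the content is identical.
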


\begin{proof}
We first recall a convenient identity for the cross-entropy expansion. For any vector 
\(u = (u_1,\dots,u_n)\) and a label vector \(\hat{y} = (\hat{y}_1,\dots,\hat{y}_n)\), we write:
\[
  \mathrm{BCE}(u,\hat{y})
  =
  \mathrm{BCE}(u,y) 
  -
  \sum_{i=1}^n \bigl[\sigma(f_i) - y_i\bigr]\,\log\!\frac{\sigma(u_i)}{1 - \sigma(u_i)}.
\]
Since \(\log\bigl(\tfrac{\sigma(u_i)}{1-\sigma(u_i)}\bigr) = u_i\), this becomes
\[
  \mathrm{BCE}(u,\hat{y})
  =
  \mathrm{BCE}(u,y)
  -
  \sum_{i=1}^n \bigl[\sigma(f_i) - y_i\bigr]\,u_i.
\]

By assumption of optimality of the adjuster solution \(f + g\), we have
\[
  \mathrm{BCE}(f + g, \hat{y}) + \lambda L_a(f + g)
  \le
  \mathrm{BCE}(h, \hat{y}) + \lambda L_a(h).
\]
Given \(L_a(f + g) = L_a(h)\), this implies
\[
  \mathrm{BCE}(f + g, \hat{y})
  \le
  \mathrm{BCE}(h, \hat{y}).
\]

We next apply the cross-entropy identity to both \(\mathrm{BCE}(f + g, \hat{y})\) and 
\(\mathrm{BCE}(h, \hat{y})\):
\[
  \mathrm{BCE}(f + g, \hat{y})
  =
  \mathrm{BCE}(f + g, y)
  -
  \sum_{i=1}^n \bigl[\sigma(f_i) - y_i\bigr]\bigl(f_i + g_i\bigr),
\]
\[
  \mathrm{BCE}(h, \hat{y})
  =
  \mathrm{BCE}(h, y)
  -
  \sum_{i=1}^n \bigl[\sigma(f_i) - y_i\bigr]\,h_i.
\]
Since \(\mathrm{BCE}(f + g, \hat{y}) \le \mathrm{BCE}(h, \hat{y})\), we rearrange to get
\[
  \mathrm{BCE}(f + g, y)
  -
  \sum_{i=1}^n \bigl[\sigma(f_i)-y_i\bigr]\bigl(f_i + g_i\bigr)
  \le
  \mathrm{BCE}(h, y)
  -
  \sum_{i=1}^n \bigl[\sigma(f_i)-y_i\bigr]\,h_i.
\]
Equivalently,
\[
  \mathrm{BCE}(f + g, y)
  \le \mathrm{BCE}(h, y)
  +
  \sum_{i=1}^n \bigl[\sigma(f_i) - y_i\bigr]\bigl(g_i - (h_i - f_i)\bigr).
\]
This can be written in vector form as
\[
  \mathrm{BCE}(f + g, y)
  \le
  \mathrm{BCE}(h, y)
  +
  (\sigma(f) - y)^\top\bigl[g - (h - f)\bigr].
\]
\end{proof}

Notice that this mirrors the MSE bound \footnote{
Interestingly, we can write both cases as $L(f + g) \leq L(h) + (\nabla_f L)^T (g - (h - f))$. It's unclear whether this is a coincidence or indicative of deeper structure.
}:
\[
L_B(f+g) \;\le\; L_B(h) \;+\; 2(f-y)^\top\bigl[g-(h-f)\bigr].
\]

\section{Experiments}

In this section, we present experimental evaluation of the proposed methodology on three datasets: the Adult \cite{uci}, COMPAS \cite{compas}, German credit \cite{uci}.
We implement the method using a combination of XGBoost as the classifier and a simple logistic regression as the adversary \footnote{
	Gradient boosted decision trees, and XGBoost in particular, have been shown to achieve more favorable performance than Neural Networks on tabular data, especially on skewed data \cite{mcelfresh2024neuralnetsoutperformboosted}.
	On the other hand, NNs are more flexible than gradient boosted tree methods, in which a tree is fixed once it is generated, since they are able to update all parameters throughout the training process. Since the input feature of the NN is a changing output, a GBT method would not be well-suited as the adversary.
}.
For each dataset, we ran a Bayesian hyperparameter search \footnote{
	Our implementation of adversarial debiasing relies on training the XGBoost model for more than 10 iterations.
	In cases where the optimal hyperparameters were fewer than 10 iterations, we multiplied the optimal number of iterations by a constant $n$ and scaled the learning rate by $1/n$ to compensate. All three models (baseline, adversarial debiasing and adjuster) were then trained with the same hyperparameters.
} using the FLAML package \cite{wang2021flaml} and then computed fairness metrics using the IBM's AIF360 package \cite{aif360-oct-2018}.
We first describe the experimental setup and subsequently provide the experimental results. 

\subsection{Experimental Setup}
We evaluated the \textbf{fairness adjuster} using the following datasets: 

\begin{list}{}
\item \textbf{Adult} This dataset contains features, demographic information, and a binary target encoding if income is greater than \$50k for 48k individuals. We treated sex as the protected attribute.
\item \textbf{German} This dataset contains features, demographic information, and a binary  target encoding good or bad credit risk for 1000 individuals. We treat age ($\geq$ 26 years) as the protected attribute.
\item \textbf{COMPAS} This dataset contains features, demographic information, and a binary target encoding if an individual re-offended within 2 years. We treated race as the protected attribute. \\
\end{list}

For each dataset we fit a baseline learner to establish the fairness of a model trained solely to optimize classification accuracy.
Each dataset is a classification task, so the baseline model minimizes the BCE (binary cross-entropy) loss.
We executed the same preprocessing procedure as implemented in the AIF360 package on the raw feature values.
In order to provide robust evaluation metrics we used 50 random seeds to execute a 5-fold cross validation routine. Within each fold and seed combination, we fit following algorithms: 

\begin{list}{}
\item \textbf{Baseline} XGBoost binary classifier, optimized without any fairness constraint. 
\item \textbf{Adversarial Debiasing} XGBoost binary classifier, optimized with the fairness loss constraint in the objective function.
This provided the main comparison between the proposed methodology and an existing, in-processing alternative algorithm to achieve demographic parity as shown in \cite{ad-xgb}.
\item \textbf{Fairness Adjuster} We implemented the methodology proposed in this paper using the baseline XGBoost classifier in combination with an XGBoost regressor issuing the adjustments to the classification predictions.\\
\end{list}

Within each seed and fold combination, we evaluated each of the above algorithms both Classifications Accuracy and Disparate Impact defined as follows:
\begin{list}{}
\item \textbf{Classification Accuracy} The proportion of correct classifications issued by the model $\frac{\sum_{i=1}^n \hat{y} = y}{n}$.
\item \textbf{Disparate Impact} defined as the proportion of favorable outcomes predicted for the protected group divided by the proportion of favorable outcomes predicted for the non-protected group.
For each of the datasets, we can clearly define the favorable outcome as
\begin{list}{}
\item \textbf{Adult} High income
\item \textbf{German} Good credit risk
\item \textbf{COMPAS} Low recidivism risk\\
\end{list}
\end{list}

Finally, through a linear search, we found the corresponding weights for the adversary and adjuster algorithms which approximately achieved equalized odds \cite{equalopportunity} for the protected and non-protected groups.

\subsection{Results}

For all data sets tested, we found that the \textbf{fairness adjuster} achieved nearly equivalent accuracy to adversarial debiasing when set to achieve equalized odds (i.e. Disparate Impact = 1).

To compare between methodologies, we calculated the mean Classification Accuracy and Disparate Impact (called ``Fairness" in the table) scores across all folds and constructed a 95\% confidence interval about the mean. 

Finally, we calculated the differences between the adversarial debiasing and the fairness adjuster loss functions $\Delta \text{Loss} = \frac{1}{n} (\sigma(f) - y)^T (g - (h - f))$\footnote{
    We add a factor of $\frac{1}{n}$ to control for the number of observations in each dataset.
} derived above. We find these differences to be close to 0 for all datasets, and consequently the adversarial debiasing and adjuster models have similar performance.

\begin{center}
\begin{tabular}{||c | c c c ||} 
	\hline
	Dataset & German & Adult & Compas \\ [0.5ex] 
	\hline\hline
	Baseline Accuracy & 0.7055±0.0011 & 0.8036±0.0001 & 0.6629±0.0005 \\ 
	\hline
	AD Accuracy & 0.6980±0.0004 & 0.7802±0.0007 & 0.6588±0.0005 \\ 
	\hline
	Adjuster Accuracy & 0.6977±0.0004 & 0.7795±0.0006 & 0.6584±0.0005 \\ 
	\hline\hline
	Baseline Fairness & 0.8215±0.0098 & 0.0000±0.0000 & 0.6999±0.0032 \\ 
	\hline
	AD Fairness & 1.0027±0.0009 & 1.0337±0.0232 & 1.0004±0.0110 \\ 
	\hline
	Adjuster Fairness & 1.0040±0.0006 & 1.0466±0.0246 & 1.0098±0.0100 \\ 
	\hline\hline
	$\Delta$ Loss & 0.00071 & $4.5045 \times10^{-5}$ & -0.00090 \\ [0.5ex] 
	\hline
\end{tabular}
\end{center}

\begin{figure}[h!]
    \centering
    \begin{subfigure}[b]{0.4\textwidth}
        \centering
        \includegraphics[width=\textwidth]{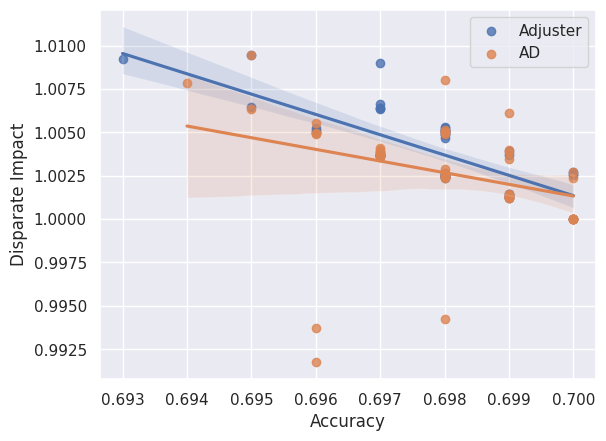}
        \caption{German dataset}
        \label{fig:german}
    \end{subfigure}
    \begin{subfigure}[b]{0.4\textwidth}
        \centering
        \includegraphics[width=\textwidth]{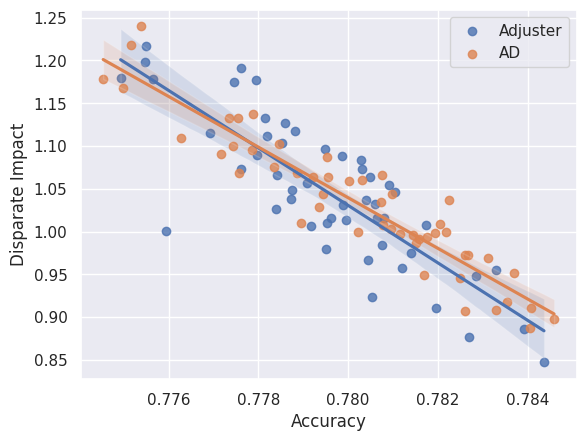}
        \caption{Adult dataset}
        \label{fig:adult}
    \end{subfigure}
    \begin{subfigure}[b]{0.4\textwidth}
        \centering
        \includegraphics[width=\textwidth]{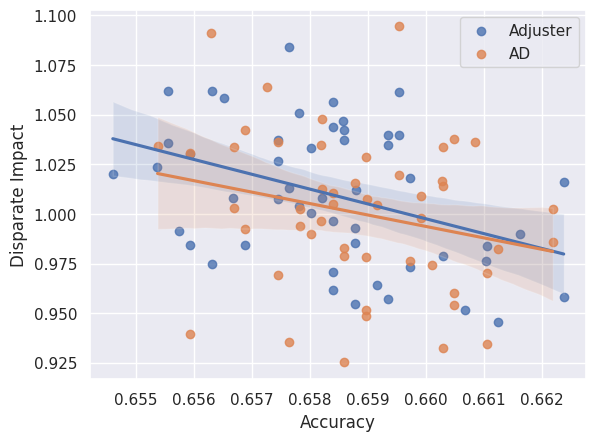}
        \caption{COMPAS dataset}
        \label{fig:compas}
    \end{subfigure}

    \caption{
    	50-seed, 5-fold CV results for three datasets: German, Adult, and COMPAS.
		The figures show the fairness-accuracy tradeoff for Adversarial Debiasing and the fairness adjuster resulting from the random CV splitting.
		Each point represents the average of the metrics over the 5 folds for either method.
		We fit a regression line with confidence bounds to better visualize the results.
    }
    \label{fig:results}
\end{figure}

\section{Discussion}
The fairness adjuster method has the following advantages:

\begin{list}{}
\item \textbf{Decoupling From Model Implementation} Our method eliminates the need to modify the loss function of the underlying model.
In contrast, competing approaches often require manual implementation of custom loss functions, which is both technically challenging and highly context-specific.
\item \textbf{Flexibility in Training Data} Unlike traditional methods that require both demographic data and training labels on the training data to optimize fairness, our approach demonstrates that training labels for the dataset used to tune fairness are not necessary.
This flexibility allows for the base model to be trained on one dataset and subsequently fine-tuned for fairness using a different dataset, even one for which demographic data is available, but targets lacking.
\item \textbf{Compatibility with Complex or Proprietary Models} The framework supports fairness tuning of complex or proprietary base models by treating them as black-box systems.
This enables the adjustment of more complex base models for which implementing in-processing methods would be difficult or impossible, e.g. an ensemble of multiple models.
Furthermore, our method allows separate entities to independently optimize models for accuracy and fairness, potentially reducing conflicts of interest and facilitating post-hoc fairness adjustments without altering the original model.
\item \textbf{Enhanced Interpretability} The fairness adjuster provides greater transparency into the effects of fairness adjustments at the individual level.
Unlike combined models that output a single prediction, the adjustment-based approach produces both the original prediction and the adjustment applied to that prediction.
This separation enables more granular interpretability and control over the final model, such as identifying fairness-relevant features or imposing constraints on the adjustment.
\end{list}

\section{Future Work}
While our current framework focuses on a single fairness constraint based on adversarial debiasing, other directions warrant further exploration:
\begin{enumerate}
\item \textbf{Adapting to Additional Fairness Metrics.} We have primarily demonstrated our method with adversarial loss functions or group-based overprediction metrics. Future work could incorporate alternative definitions such as Equalized Odds, Calibration, or other constraints that may require specialized loss surrogates. A challenge is ensuring that the pseudo-label choice still admits an offset solution with minimal accuracy degradation.
\item \textbf{Extending to Complex Model Architectures.} Our offset approach can be applied to models with structured outputs (e.g.\ sequences) or to more intricate neural network structures. Investigating how to design the “adjuster” component for architectures like large language models or deep generative models could open new avenues of post‐hoc fairness tuning.
\item \textbf{Robustness and Generalization Guarantees.} While we outlined theoretical results comparing in‐processing vs. offset solutions, more rigorous bounds on generalization error under fairness constraints remain an open question. For instance, combining Rademacher complexity arguments with fairness constraints might yield improved theoretical guarantees for large‐scale offset solutions.
\item \textbf{Handling Multiple Protected Attributes.} Much of the literature, including our main demonstration, focuses on one sensitive attribute at a time (e.g.\ race, gender). Future work could investigate how to incorporate additional attributes or intersectional fairness constraints, ensuring that our offset framework remains tractable.
\end{enumerate}

\section{Conclusion}
We introduced a \emph{post‐processing} framework, the \textbf{fairness adjuster}, that adapts in‐processing techniques for use on top of an existing model’s predictions. Our theoretical analysis showed that, under certain assumptions, this offset approach can reproduce a similar solution as a directly debiased model, at a minimal accuracy loss for a comparable fairness improvement. Empirical results on real datasets (Adult, German, and COMPAS) demonstrated that our fairness adjuster achieves a near‐identical fairness-accuracy tradeoff to the in‐processing adversarial debiasing method, while offering important practical benefits: the elimination of custom model‐training constraints, separation of the baseline model’s optimization from fairness constraints, and interpretability of the offset itself. By treating models as black boxes, it also provides flexibility in datasets used for fairness tuning and accommodates proprietary systems. We hope that this methodology fosters a broader adoption of fairness improvements with minimal impact on existing machine learning pipelines.

\section{References}
\bibliographystyle{ACM-Reference-Format}
\bibliography{paper/fairness_adjuster_paper}

\end{document}